\documentclass[a4paper,fleqn]{cas-dc}
\usepackage{graphicx}
\usepackage{amsmath,amssymb}
\usepackage{booktabs}
\usepackage{hyperref}
\newenvironment{proof}{\par\noindent\textit{Proof.}\ }{\hfill$\square$\par}
\newtheorem{proposition}{Proposition}

\def\tsc#1{\csdef{#1}{\textsc{\lowercase{#1}}\xspace}}
\tsc{WGM}
\tsc{QE}
\begin{document}
\let\WriteBookmarks\relax
\def\floatpagepagefraction{1}
\def\textpagefraction{.001}
%\interfootnotelinepenalty=10000
% ===============================
% Short title and short author
% ===============================
\shorttitle{STAP: Shuffle-Tokenized App Predictor}
\shortauthors{C. Fan et al.}
% ===============================
% Full title
% ===============================
\title [mode = title]{STAP: A Shuffle-Tokenized App Predictor with Ultra Long Context for Vocabulary-Free Mobile App Prediction}
% ===============================
% Author 1
% ===============================
\author[1]{Chengyu Fan}[orcid=0009-0009-2413-295X]             
\ead{fancy6690@mail.ustc.edu.cn}
% ===============================
% Author 2
% ===============================
\author[2]{Hang Liu}[orcid=0000-0002-4643-1231]
\cormark[1]  
\ead{hliu01@ustc.edu.cn}
% ===============================
% Affiliations
% ===============================
\affiliation[1]{organization={School of Nuclear Science and Technology, University of Science and Technology of China},
    addressline={96 Jinzhai Road},
    city={Hefei},
    postcode={230026},
    state={Anhui},
    country={China}}

\affiliation[2]{organization={Department of Statistics and Finance, University of Science and Technology of China},
    addressline={96 Jinzhai Road},
    city={Hefei},
    postcode={230026},
    state={Anhui},
    country={China}}

% ===============================
% Corresponding author text
% ===============================
\cortext[1]{Corresponding author: Hang Liu, hliu01@ustc.edu.cn}

% ===============================
% Abstract
% ===============================
\begin{abstract}
Predicting the next mobile application a user will launch is essential for intelligent device resource management and proactive assistance.
Existing models rely on fixed app vocabularies, which prevents them from generalizing across different app ecosystems. Many also depend on user-specific knowledge, which complicates deployment in cold start scenarios.
We propose STAP, a Transformer-based model that eliminates the need for a fixed vocabulary.
STAP replaces true app identities with randomly reassigned virtual indices via a shuffle mechanism, and compensates for discarded semantic information by processing behavioral sequences with an ultra-long context design.
A theoretical analysis shows that, given a sufficiently long context, the predicted distribution converges to the correct one despite the anonymity of the mapping.
Experiments on two datasets from different continents demonstrate that STAP achieves strong cross-dataset zero-shot prediction accuracy---a setting where all existing fixed-vocabulary methods are inherently inapplicable---while its cold start performance within each dataset remains competitive with leading models. Furthermore, we introduce a deployment strategy that enables the model to retain a sufficiently long context during continuous inference while keeping latency within acceptable bounds.

\end{abstract}
% ===============================
% Graphical abstract (optional)
% ===============================
%\begin{graphicalabstract}
%\includegraphics{graphical_abstract.pdf}
%\end{graphicalabstract}
% ===============================
% Research highlights
% ===============================
% \begin{highlights}
% \item Shuffle mechanism decouples prediction from fixed app vocabularies.
% \item Ultra-long context recovers behavioral semantics lost during shuffling.
% \item ISWI guarantees deep context while avoiding costly prefills.
% \item Zero-shot cross-dataset mobile app prediction demonstrated.
% \item Theoretical justification for vocabulary-free learning under random indexing.
% \end{highlights}
% ===============================
% Keywords
% ===============================
\begin{keywords}
Mobile app prediction \sep vocabulary free \sep Transformer \sep shuffle mechanism \sep long context \sep cross dataset generalization
\end{keywords}

\maketitle

\section{Introduction}\label{sec:introducion}

\subsection{Literature review on app prediction}
% Today, mobile phones are becoming more and more essential in our daily life. To make the user experience even better, it is very useful to predict which app a user will open next \cite{introduction1}. The prediction results can be used for many purposes, such as: recommending apps so users can open them more easily and preloading apps to reduce startup waiting time \cite{show_app}\cite{appflow}.

As mobile devices pervade our daily lives, accurately predicting the next app a user is likely to launch is crucial for enhancing user experience \cite{introduction1}. Such predictions can serve various purposes, such as recommending apps for easier access and preloading apps to reduce startup latency \cite{show_app}\cite{appflow}.

However, app prediction remains a challenging task. Early explorations either underperform since they fail to leverage complex temporal and multimodal information, or rely heavily on hand-crafted features, which limits them to a single dataset \cite{early_1, early_2, early_3, early_4, knn, mk}. Consequently, these methods offer limited value for real-world applications.

With the development of deep learning techniques, some studies attempt to introduce RNNs, attention mechanisms, graph networks, and similar architectures to capture more complex patterns \cite{DeepApp, DeepPattern,App2Vec,SAGCN}. However, since these methods rely on user-specific information, they cannot be applied to cold-start settings---i.e., training on a subset of users and testing on unseen users.

More recent work has started to incorporate large language models or design sophisticated temporal enhancement modules \cite{CoSEM,NeuSA,MAPLE,TGT}. These efforts endow models with stronger generalization capabilities, enabling them to address the challenging cold-start task. However, due to their dependence on a fixed app vocabulary, the cold-start ability of these models is limited to a single dataset: they cannot achieve cross-dataset generalization or require fine-tuning to do so.

Moreover, real-world deployment poses a challenge rarely captured by standard benchmarks: new apps constantly appear after training. While existing cold-start methods can handle new users, they still assume a fixed app vocabulary and thus cannot handle apps unseen during training.
% Open datasets often exacerbate this issue---privacy concerns lead to anonymization of app names  (e.g., the Tsinghua App Usage Dataset), \textcolor{blue}{and differences in time or location shift the popular app set. Rewrite this sentence} Therefore, cross-dataset zero-shot transfer, where the model generalizes to a new app ecosystem without any fixed vocabulary, is a critical step from laboratory research to practical deployment.
Open datasets often exacerbate this issue---privacy concerns lead to anonymization of app names  (e.g., the Tsinghua App Usage Dataset), and differences in collection time and location can lead to entirely different popular-app sets across datasets. Therefore, cross-dataset zero-shot transfer, where the model generalizes to a new app ecosystem without any fixed vocabulary, is a critical step from laboratory research to practical deployment.

\subsection{Our contributions}

We propose {\it Shuffle-Tokenized App Predictor} (STAP), a Transformer‑based model that operates without a fixed app vocabulary. Our key contributions are as follows. 

\begin{enumerate}
    \item[(i)] The model introduces a \textbf{shuffle mechanism} that randomly maps each original app to to a virtual index per user and per epoch, thereby completely decoupling the model from any specific app set. This enables zero‑shot transfer across disjoint app ecosystems without retraining or fine‑tuning.
    \item[(ii)] The model incorporates an \textbf{ultra‑long context design} that compensates for the loss of app identity information by leveraging extended temporal and structural patterns. We provide a theoretical justification showing that with sufficiently long context, the model converges to the correct predictions despite the anonymous mapping.
    \item[(iii)] Furthermore, we design an \textbf{interleaved semi‑window inference (ISWI)} strategy that enables the model to retain a sufficiently long context during continuous inference while maintaining an acceptable latency.    
    \item[(iv)] Extensive experiments on two datasets from different continents demonstrate that STAP not only achieves cross-dataset zero-shot transfer but also remains competitive with state‑of‑the‑art models.
\end{enumerate}

The remainder of this paper is organized as follows. Section~\ref{sec:methodology} formulates the problem, introduces the core mechanisms (the shuffle mechanism and the ultra-long context strategy), the model architecture and ISWI deployment strategy. Section~\ref{sec:experiments} presents comprehensive numerical results, including cross‑dataset and in‑dataset performance, ablation and sensitivity studies, as well as measurements of inference latency and memory usage. Section~\ref{sec:discussion} provides a theoretical justification for the shuffle mechanism and the role of long context, reports distinctive training dynamics, and discusses limitations and future work. Section~\ref{sec:conclusion} concludes the paper.

\section{Methodology}\label{sec:methodology}
\subsection{Problem Formulation}\label{sec:fomulation}

% \textcolor{blue}{Be careful with the notation: I replaced $k$ with $s$ in this section, since $k$ is used in Section 3.1.3.}

We aim to predict the next application a user will open. Let $\mathcal{A} = \{a_1, a_2, \dots, a_M\}$ denote the finite set of mobile applications. A user’s interaction history is represented as a chronological sequence of app usage events. Each event is defined as a quadruple $\mathrm{E}_i = (A_i, C_i, T_i, H_i)$, where:
\begin{itemize}
    \item $A_i \in \mathcal{A}$ denotes the unique App ID;
    \item $C_i \in \{0, 1\}$ indicates the action type (0 for \textit{Close}, 1 for \textit{Open});
    \item $T_i \in \mathbb{R}$ represents the absolute timestamp (measured in minutes);
    \item $H_i \in [0, 24)$ denotes the hour of the day.
\end{itemize}
% A user's interaction can be treated as an infinite sequence of events $\mathrm{E}_{1:\infty}= (\mathrm{E}_1, \mathrm{E}_2, \dots)$. Given $\mathrm{E}_{1:\infty}$ and a prefix sequence $\mathrm{E}_{1:t} = (\mathrm{E}_1, \mathrm{E}_2, \dots, \mathrm{E}_t)$ up to time $t$ , the goal is to predict the next app $A_{s} \in \mathcal{A}$ that the user will open for $s = \min\{i>t \mid C_i = 1\} $. 

A user's interaction can be treated as an random infinite sequence of events $\mathrm{E}_{1:\infty}= (\mathrm{E}_1, \mathrm{E}_2, \dots)$. Given $\mathrm{E}_{1:\infty}$ and a prefix sequence $\mathrm{E}_{1:t} = (\mathrm{E}_1, \mathrm{E}_2, \dots, \mathrm{E}_t)$ up to time $t$ , the goal is to predict the next app $A_{s} \in \mathcal{A}$ that the user will open for $s = \min\{i>t \mid C_i = 1\} $.

% This is a multi-class classification problem with $M = |\mathcal{A}|$ classes. The prediction model is a function $f : \mathrm{E}^* \rightarrow \mathbb{R}^M$ mapping any prefix $\mathrm{E}_{1:t} \in \mathrm{E}^*$ \textcolor{blue}{$\mathrm{E}^*$ should be defined} to a vector of scores (logits) $\mathbf{z}_t = f(\mathbf{E}_{1:t} \mid \Theta)$, with $\Theta$ denoting the model parameter. The predicted probability distribution over apps is obtained via the softmax function:
Let $\mathrm{E}^*$ be the set of all finite sequences of events (all possible prefixes). 
This is a multi-class classification problem with $M = |\mathcal{A}|$ classes. The prediction model is a function $f : \mathrm{E}^* \rightarrow \mathbb{R}^M$ mapping any prefix $\mathrm{E}_{1:t} \in \mathrm{E}^*$ , with $\Theta$ denoting the model parameter. The predicted probability distribution over apps is obtained via the softmax function:
% $\hat{\mathbf{p}} = \operatorname{softmax}(\mathbf{z_t}) \in \Delta^{M-1}$. \textcolor{blue}{since $\Delta$ is not used elsewhere, simply write $\in (0, 1)$. Do you mean $M$, rather than $M-1$?} Finally, the predicted app is the one with the highest probability, i.e.,
%$\hat{A}_{k} = \arg\max_i \hat{\mathbf{p}}_i$ 
% \textcolor{blue}{$$\hat{A}_{s} = \arg\max_{i \in \{1, \ldots, M\}} \hat{\mathbf{p}}_i.$$}
$\hat{\mathbf{p}} = \operatorname{softmax}(\mathbf{z_t})$, where $\hat{\mathbf{p}} = (\hat{{p}}_1, \ldots, \hat{{p}}_M) \in (0, 1)^M$ is a probability vector over $M$ classes. Finally, the predicted app is the one with the highest probability, i.e.,
%$\hat{A}_{k} = \arg\max_i \hat{\mathbf{p}}_i$ 
$$\hat{A}_{s} = \arg\max_{i \in \{1, \ldots, M\}} \hat{{p}}_i.$$

\subsection{Core Mechanisms}\label{sec:mechanism}
Our model relies on two complementary mechanisms that together eliminate the need for a fixed app vocabulary:
\begin{enumerate}
    \item The \textbf{shuffle mechanism} (Section~\ref{sec:shuffle}) removes the dependency on a specific app set, enabling cross-dataset transfer, and simultaneously acts as a strong regularizer that prevents the model from overfitting to noise under ultra-long contexts (see ablation study in Section~\ref{sec:ablation}).
    \item The \textbf{ultra-long context} (Section~\ref{sec:longcontext}) compensates for the information discarded by the shuffle mechanism, allowing the model to recover implicit behavioral cues from extended temporal patterns (see sensitivity study in Section~\ref{sec:sensitivity} and theoretical justification in Section~\ref{sec:theoretical}).
\end{enumerate}

\subsubsection{Shuffle Mechanism}\label{sec:shuffle}

\begin{figure*}[t]
    \centering
    \includegraphics[width=0.95\textwidth]{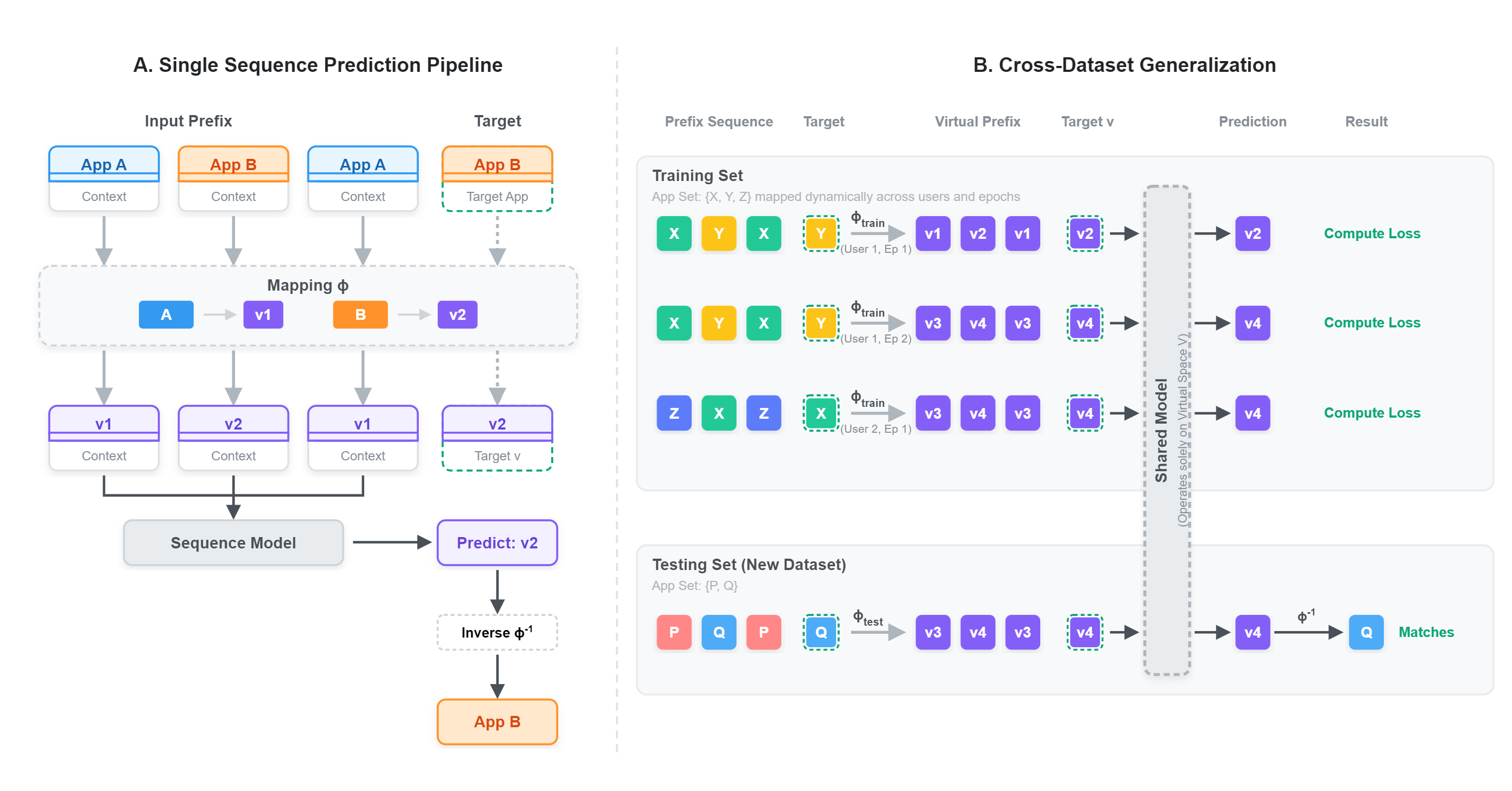}
    \caption{Schematic of the shuffle mechanism. (A)~Single sequence pipeline: real app IDs in the input sequence are mapped to virtual indices via a stochastic injective mapping $\phi$, which is independently sampled per user and epoch. The model processes and predicts entirely in the virtual vocabulary $\mathcal{V}$, and the real app is recovered via $\phi^{-1}$. (B)~Cross-dataset generalization: because $\phi$ is random and independent of any app-specific semantics, sequences from entirely unseen apps (e.g., $\{P,Q\}$) can be handled by the same virtual vocabulary, enabling zero-shot transfer.}
    \label{fig:shuffle_architecture}
\end{figure*}

Let $\mathcal{V} = \{0, 1, \dots, V-1\}$ be a fixed-size virtual vocabulary. Let $\mathcal{A}(\mathrm{E}_{1:\infty}) = \{A_i\mid i\in \mathbb{N}^+\}$. We require $V \ge |\mathcal{A}(\mathrm{E}_{1:\infty})|$ for every user retained in the dataset. In practice, we set $V = 200$, which covers the vast majority of users; the few users with more than $200$ distinct apps are treated as extreme outliers and discarded (see Section~\ref{sec:dataset}).

For \textbf{each user} and \textbf{each training epoch}, we independently sample an \textbf{injective mapping} $\phi: \mathcal{A}(\mathrm{E}_{1:\infty}) \rightarrow \mathcal{V}$ that assigns a unique virtual index to every app observed in that user's history. The model then predict the next virtual index $v_s = \phi(A_{s})$ based on the previous processed events $\mathrm{E}_{1:t}'$ where $\mathrm{E}_{1:t}' = (\mathrm{E}'_1, \mathrm{E}_2', \dots, \mathrm{E}_t')$ , with $\mathrm{E}_i' = (v_i, C_i, T_i, H_i)$ and $v_i = \phi(A_i)$~(See Fig. \ref{fig:shuffle_architecture}~A). In other words, the model becomes a function $f' : \mathrm{E'}^* \rightarrow \mathbb{R}^V$ that maps any prefix $\mathrm{E'}_{1:t} \in \mathrm{E'}^*$ to a vector of vitural scores $\mathbf{z}_t' = f'(\mathbf{E}_{1:t}' \mid \Theta)$. The final prediction of the real app becomes:
\begin{equation}
    \begin{aligned}
    \hat{\mathbf{p}}' &= \operatorname{softmax}(\mathbf{z}_t') \\
    \hat{A}_s &= \phi^{-1}\Big[\arg\max_i \{\hat{{p}}'_i\mid \exists a \in \mathcal{A}(\mathrm{E}_{1:\infty}) \, \text{s.t.} \, \phi(a) = i\}\Big],
    \end{aligned}
\end{equation}
with $\hat{\mathbf{p}}' = (\hat{{p}}'_1, \ldots, \hat{{p}}'_{|V|})$.

Because $\phi$ is a random injective map independent of any app-specific semantics, and because the model operates entirely on the virtual vocabulary $\mathcal{V}$, neither component relies on the true app identities. This is what makes cross-dataset generalization possible: the whole pipeline is decoupled from the fixed app set $\mathcal{A}$ used in training. (See Fig. \ref{fig:shuffle_architecture}~B)

\subsubsection{Ultra-long Context}
\label{sec:longcontext}

Conventional app usage prediction methods typically condition prediction on only a short recent window of length $L$, i.e., $f(\mathrm{E}_{1:t}) \approx f(\mathrm{E}_{t':t})$ with $t' = \max(1, t-L+1)$ \cite{CoSEM, MAPLE, TGT}. 
However, applying the shuffle mechanism with such a limited context would discard nearly all static semantic signals, since the random mapping $\phi$ removes app identity information from the input.

To compensate for this loss, we adopt an ultra-long context strategy, setting $L$ to be a sufficiently large value. 
A theoretical justification is provided in Section~\ref{sec:theoretical}, where we show that a sufficiently long context allows the model to converge to the correct predictions despite the anonymity of the mapping. 
Our sensitivity analysis (Section~\ref{sec:sensitivity}) further confirms that increasing the context length consistently improves prediction accuracy, with gains gradually saturating at larger values. 
Based on these experiments, we set $L = 4096$ as the default context length for all subsequent experiments.

\subsection{Architecture}\label{sec:architecture}

\begin{figure*}[t] 
    \centering
    \includegraphics[width=1.0\textwidth]{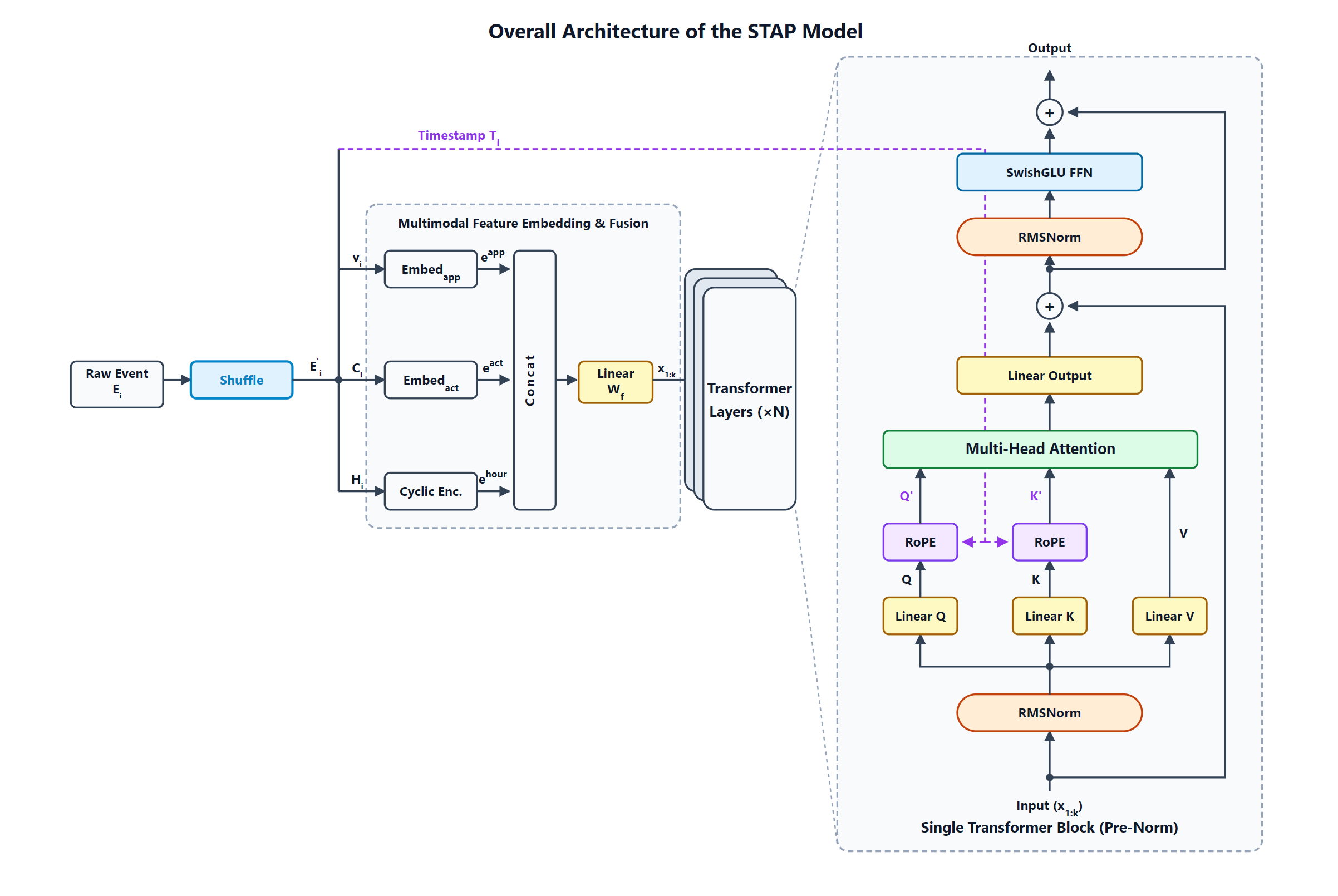} 
    \caption{The overall architecture of the STAP model. (Left) The input processing pipeline featuring the shuffle mechanism and multimodal feature fusion. (Middle) The backbone consisting of $N$ Transformer layers. (Right) The detailed structure of a single Transformer block, highlighting the Pre-Norm configuration, RMSNorm, SwishGLU activation, and the injection of absolute timestamps $T_i$ into the Rotary Positional Embedding (RoPE) module.}
    \label{fig:transformer_arch} 
\end{figure*}
The STAP model follows a modern Transformer-based architecture to ensure stable training and high capacity \cite{llama}. Specifically, we employ Pre-Norm residual connections \cite{prenorm}, RMSNorm for normalization \cite{rmsnorm}, SwishGLU activation \cite{swishglu}, and Rotary Positional Embeddings  \cite{rope}. The overall architecture is illustrated in Fig. \ref{fig:transformer_arch}.

\subsubsection{Multimodal Feature Embedding and Fusion}\label{sec:multimodal}
To capture the multidimensional nature of app usage events, we design a fusion layer that projects heterogeneous features into a unified $d$-dimensional latent space. For each event $\mathrm{E}_i$, the embedding process is as follows.

\textbf{Categorical features.} The virtual app index $v_i = \phi(A_i)$ and the action $C_i$ are embedded via two lookup tables:
$\mathbf{e}_i^{\mathrm{app}} = \mathrm{Embed}_{\mathrm{app}}(v_i)$,
$\mathbf{e}_i^{\mathrm{act}} = \mathrm{Embed}_{\mathrm{act}}(C_i)$.

\textbf{Cyclic temporal features.} To capture daily periodicity, the hour-of-day $H_i \in [0,24)$ is encoded as
$\mathbf{h}_i = [\,\sin(\tfrac{2\pi H_i}{24}),\; \cos(\tfrac{2\pi H_i}{24})\,]^{\top}$,
then projected to dimension $d$: $\mathbf{e}_i^{\mathrm{hour}} = \mathbf{W}_h\, \mathbf{h}_i + \mathbf{b}_h$.

The final input embedding $\mathbf{x}_i \in \mathbb{R}^d$ is obtained by concatenating the three vectors and passing them through a fusion linear layer:
\begin{equation}
    \mathbf{x}_i = \mathbf{W}_f \cdot [\mathbf{e}_i^{\rm app} ; \mathbf{e}_i^{\rm act} ; \mathbf{e}_i^{\rm hour}] + \mathbf{b}_f
\end{equation}
where $[\cdot ; \cdot]$ denotes the concatenation operation, and $\mathbf{W}_f \in \mathbb{R}^{d \times 3d}$ projects the fused features back to the model dimension $d$.

\subsubsection{Time-Aware Positional Encoding via RoPE}\label{sec:rope}

In standard RoPE, the rotation angle for query and key vectors at position $i$ is $i \cdot \Theta$, where $\Theta = \{\theta_k = b^{-2k/d}\}_{k=0}^{d/2-1}$ and the base $b$ is typically set to $10000$ \cite{rope}.
In our case, the absolute timestamps $T_i$ (measured in minutes) can span tens of thousands, so a larger base is needed to prevent the rotation angles from growing too rapidly, which would degrade the model's ability to distinguish nearby events.
We set $b = 100000$ and use the raw timestamp $T_i$ directly as the position signal.
The RoPE transformation becomes
\begin{equation}
    \tilde{\mathbf{q}}_i = \mathbf{R}_{T_i} \mathbf{q}_i,\quad \tilde{\mathbf{k}}_j = \mathbf{R}_{T_j} \mathbf{k}_j,
\end{equation}
where $\mathbf{R}_t$ is a block-diagonal rotation matrix with angles $t \cdot \theta_k$.
Consequently, the attention score $\tilde{\mathbf{q}}_i^{\!\top}\tilde{\mathbf{k}}_j$ depends only on the relative time difference $T_i - T_j$, enabling the model to perceive temporal distance naturally.

\subsubsection{Transformer Backbone and Training Objective}\label{sec:transformer}
The fused embeddings are fed into $N$ Transformer layers, each composed of a multi-head attention layer and a SwishGLU feed-forward network, with RMSNorm applied before each sub-layer (Pre-Norm). The output hidden state is projected to a distribution over the virtual vocabulary $\mathcal{V}$ via a linear layer and softmax. We train the model with the standard cross-entropy loss, where the target is the virtual index $v_{k}$; during inference, the predicted virtual index is mapped back to a real app using $\phi^{-1}$ as detailed in Section~\ref{sec:shuffle}.

\subsection{Deployment Strategy} \label{sec:iswi}
\begin{figure*}[t]
    \centering
    \includegraphics[scale=0.45]{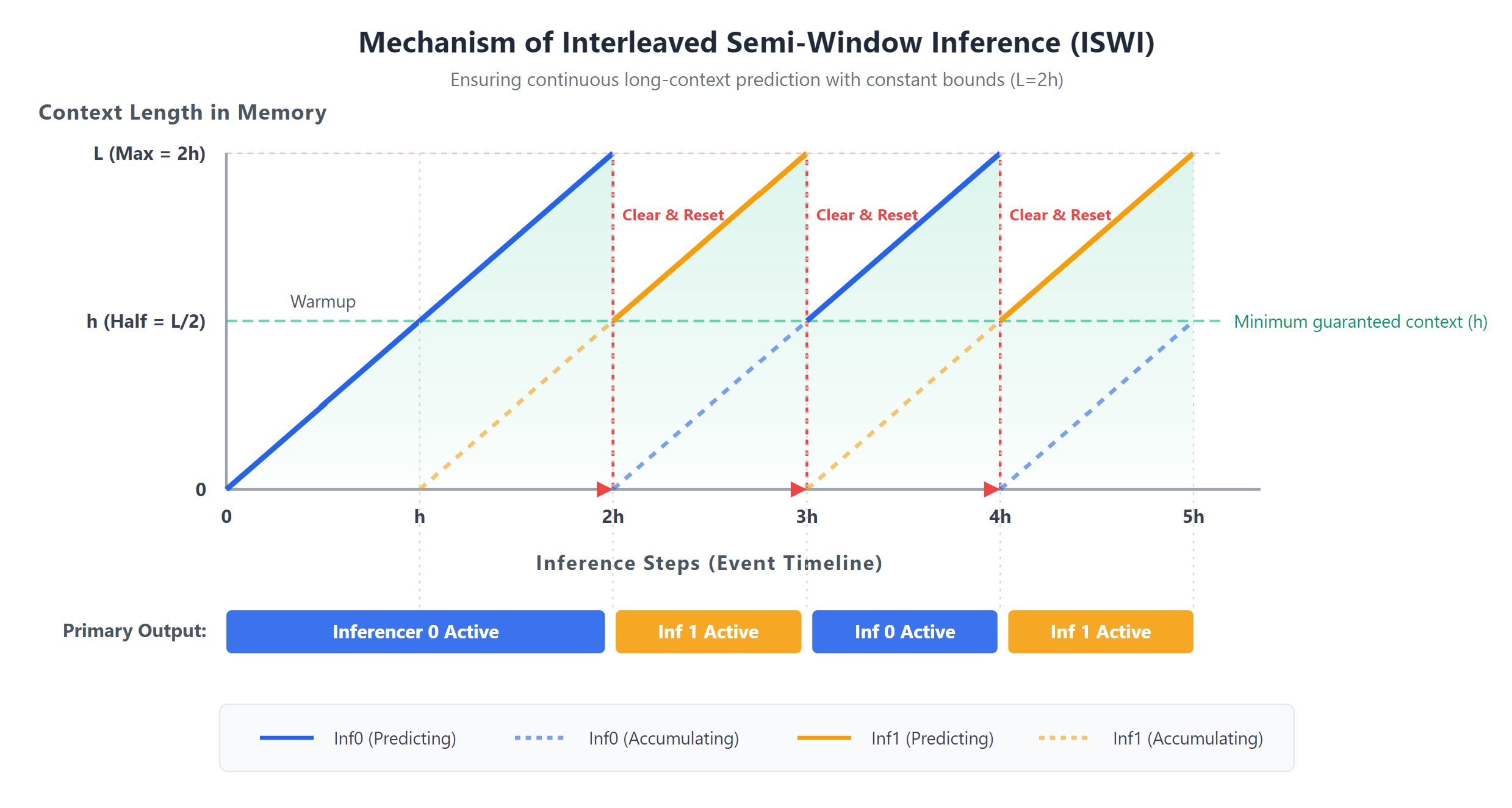}
    \caption{Illustration of the ISWI strategy. By maintaining two overlapping inference instances ($\text{Inf}_0$ and $\text{Inf}_1$), the system guarantees a minimum historical context of $h=L/2$ at any given prediction step (green shaded area), effectively eliminating the "cold-start" performance drop caused by periodic buffer resets. Solid lines indicate the active predicting instance, while dashed lines represent background context accumulation.}
    \label{fig:iswi}
\end{figure*}

% \begin{enumerate}
%     \item \textbf{Need for long context.} As established in Sections~\ref{sec:theoretical}, \ref{sec:longcontext} and \ref{sec:sensitivity}, the model relies on a sufficiently long history to recover the behavioural information masked by shuffling.
%     \item \textbf{Context cannot grow unboundedly.} The model is trained with a maximum context of $L$ events (e.g., $L=4096$), so inference beyond $L$ is out of distribution. Even if one ignores this training limit, the memory and per-step latency of a KV cache grow with its length, making unbounded growth infeasible on devices with limited resources.
%     \item \textbf{Restart cost.} When the cache reaches $L$, simply clearing it and starting a new window either discards too much context (cold-start accuracy drop) or, if the new window is long, incurs an $\mathcal{O}(L^2)$ prefill that is too slow.
% \end{enumerate}

Deploying a model with an ultra‑long context involves a trade‑off between prediction accuracy and inference cost. On one hand, as illustruated in Sections~\ref{sec:theoretical}, \ref{sec:longcontext} and \ref{sec:sensitivity}, the model relies on a sufficiently long history to recover the behavioural information masked by shuffling. On the other hand, the \textbf{context length cannot grow unboundedly}: The model is trained with a maximum context of $L$ events (e.g., $L=4096$), so inference beyond $L$ is out of distribution. Even if one ignores this training limit, the memory and per-step latency of a KV cache grow with its length, making unbounded growth infeasible on devices with limited resources.

When the cache reaches $L$, a straightforward approach is to reset it. However, simply clearing the cache and starting a new window discards too much context, leading to a sharp drop in accuracy (a cold-start problem). Conversely, prefilling a new long window from scratch incurs an \(\mathcal{O}(L^2)\) computational cost.

To resolve this conflict, we propose a \textbf{Interleaved Semi-Window Inference (ISWI)} strategy, which maintains two overlapping inference instances, $\text{Inf}_0$ and $\text{Inf}_1$, to guarantee a minimum historical context of $h = L/2$ at every prediction step (see Fig.~\ref{fig:iswi}). The prediction process is divided into stages, where step $T$ (starting from $0$) belongs to stage $\lfloor T/h \rfloor$. In stage $0$, only $\text{Inf}_0$ receives the new event and maintains the KV cache. For any stage $i>0$, both instances receive the new event. When $i$ is odd, $\text{Inf}_0$ is active and makes the predictions, while $\text{Inf}_1$ passively accumulates context; at the end of an odd stage, $\text{Inf}_0$ clears its KV cache. Symmetrically, when $i$ is even, $\text{Inf}_1$ takes over the active role (performing the same function as $\text{Inf}_0$ in odd stages) and makes predictions, while $\text{Inf}_0$ passively accumulates context; at the end of an even stage, $\text{Inf}_1$ clears its KV cache.

This design keeps both memory usage and per-step latency bounded: while the cache of one instance grows toward $L$ before being cleared, the other instance is always in a lighter accumulation phase, so the total resource consumption never exceeds that of two full caches. Quantitative measurements are reported in Section~~\ref{sec:exp_deploy}.

\section{Experiments}\label{sec:experiments}

% \textcolor{blue}{For notation consistency, you should replace $K$ with $L$.}
% I've replaced.

In this section, we conduct a comprehensive evaluation of the proposed framework. We first describe the experimental setup, including dataset, evaluation metrics, and implementation details. We then present a series of experiments targeting the core components of our model: (1) comparative performance analysis against leading models to assess predictive accuracy and robustness; (2) ablation and sensitivity studies to verify the necessity of the shuffling mechanism and the benefit of ultra-long historical context; and (3) measurement of the computational efficiency of the ISWI deployment strategy.

\subsection{Experimental Setup}\label{sec:setup}

\subsubsection{Dataset}\label{sec:dataset}

\begin{table}[htbp]
    \centering
    \caption{Dataset characteristics.}
    \label{tab:dataset}
    \begin{tabular}{lcc}
        \toprule
        \textbf{Dataset} & \textbf{Users} & \textbf{Apps} \\
        \midrule
        Tsinghua App Usage Dataset & 871 & 2000 \\
        LSapp                     & 292 & 87   \\
        \bottomrule
    \end{tabular}
\end{table}

We evaluate our approach on two widely used benchmarks: the Tsinghua App Usage Dataset \cite{TsinghuaApp} and the LSapp Dataset \cite{NeuSA} . Basic statistics are summarized in Table~\ref{tab:dataset}, where \emph{Users} indicates the number of valid users and \emph{Apps} is the number of distinct app categories in each dataset. The Tsinghua dataset assigns user IDs from 0 to 999, of which 871 actually appear and are retained; the rest correspond to no recorded usage. Since our model relies on ultra-long context (see Section~\ref{sec:longcontext}) and a virtual vocabulary (see Section~\ref{sec:shuffle}) whose size should exceed the number of app categories per user, we examine the per-user distribution of both total events\footnote{Each app usage contains one open event and one close event, so the number of events is twice the number of usage records.} and distinct apps across the whole dataset. These distributions are shown in Fig.~\ref{fig:distribution}. 

% \textcolor{blue}{Texts and numbers in Fig.~\ref{fig:distribution} are not clear. Could the resolution be improved? The font size should be larger.}

\begin{figure*}[t]
    \centering
    \includegraphics[width=\linewidth]{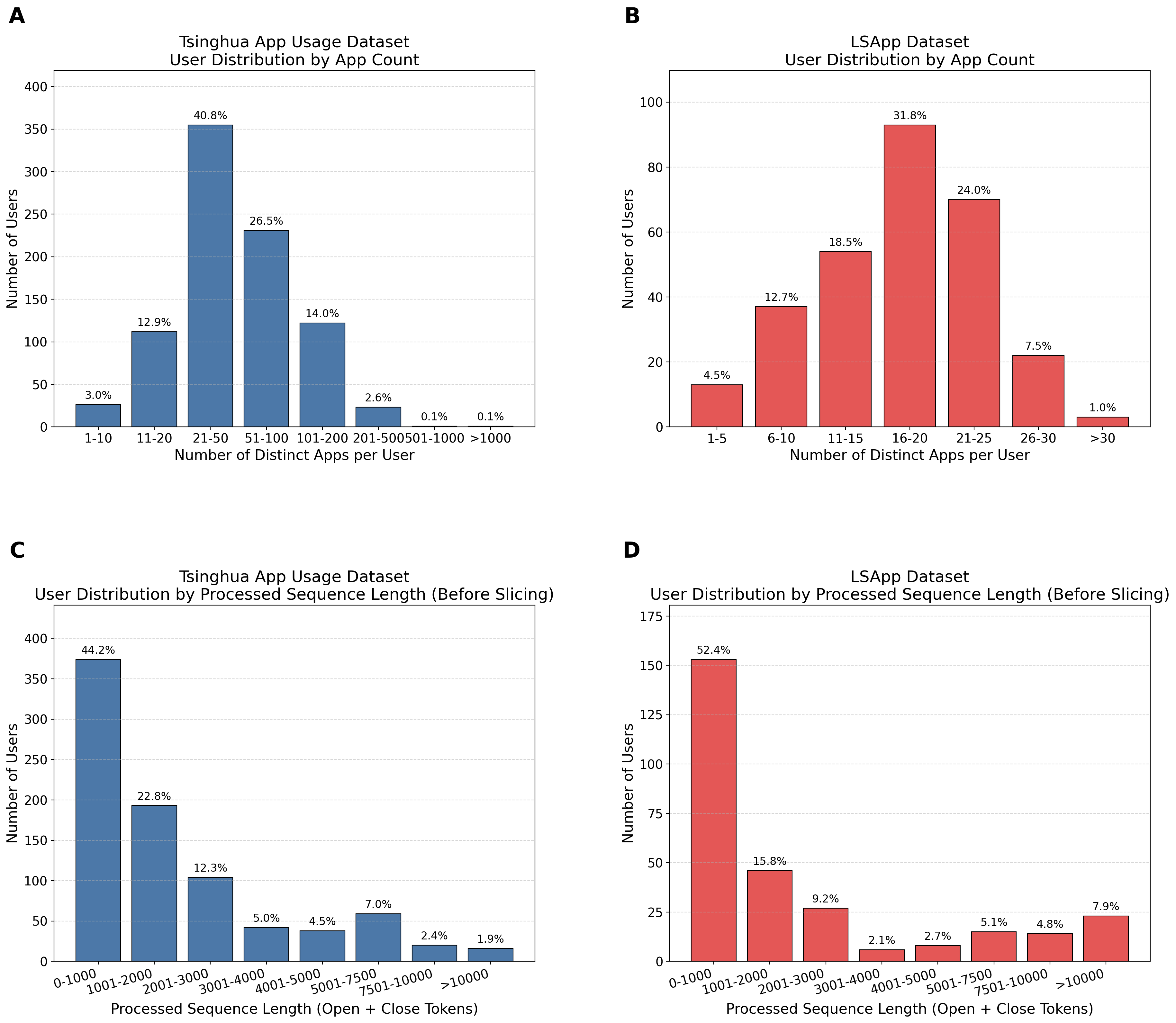} 
    \caption{Per-user distributions of distinct apps and processed event lengths across the Tsinghua and LSapp datasets. (A) and (B): number of distinct apps. (C) and (D): processed event length (twice the number of apps) before slicing.}
    \label{fig:distribution}
\end{figure*}

\noindent \textbf{Hyperparameter selection.} 
The virtual vocabulary size $|\mathcal{V}| = 200$ is chosen based on the per-user app count distributions (see Fig.~\ref{fig:distribution}): a vocabulary of 200 covers over 97\% of the users in the Tsinghua dataset and 100\% in the LSapp dataset. 
The maximum context length $L = 4096$ is determined by jointly considering these distributions and the sensitivity analysis in Section~\ref{sec:sensitivity}: most usage sequences in the two datasets contain fewer than 4096 events, and the performance gains from further increasing $L$ diminish beyond this point.

\subsubsection{Data processing}\label{subsec.process}

\begin{table*}[htbp]
    \centering
    \caption{Data split strategy (number of users).}
    \label{tab:dataset_split}
    \begin{tabular}{lcccccccl}
        \toprule
        \textbf{Experiment} & \multicolumn{3}{c}{\textbf{Tsinghua}} & \multicolumn{3}{c}{\textbf{LSapp}} & \multicolumn{2}{c}{\textbf{Description}} \\
        \cmidrule(lr){2-4} \cmidrule(lr){5-7} \cmidrule(lr){8-9}
        & Train & Val. & Test & Train & Val. & Test & Training on & Testing on \\
        \midrule
        Cross-dataset & 677 & 169 & --  & -- & -- & 292 & Tsinghua & LSapp \\
        Cross-dataset & --  & --  & 846 & 58 & 234 & -- & LSapp & Tsinghua \\
        In-dataset    & 592 & 169 & 85  & -- & -- & -- & Tsinghua & Tsinghua \\
        In-dataset    & --  & --  & --  & 204 & 29 & 58 & LSapp & LSapp \\
        \bottomrule
    \end{tabular}
\end{table*}

The raw data are processed through the following steps.

\begin{enumerate}
    \item \textbf{Filtering.} Users who have more than 200 distinct apps are removed, leaving 846 users in the Tsinghua dataset and 292 in LSapp. To avoid ambiguity between consecutive usage records of the same app, we merge consecutive records of the same application by the same user into a single entry.\footnote{This means that in the processed data, two consecutive app events always correspond to different apps. This is a necessary compromise because the dataset does not indicate whether multiple consecutive records of the same app belong to a single session or several independent sessions. Among the baselines in Section~\ref{sec:performance}, MFU and MRU are evaluated on this same processed data, ensuring a fair comparison with STAP.}

    \item \textbf{Data splitting.} We randomly divide the users into training, validation, and test sets. For \textbf{cross-dataset transfer} experiments (See in Section~\ref{sec:baseline}), one dataset is split into training and validation sets with an 8:2 ratio, and the entire other dataset serves as the test set. For \textbf{in-dataset} experiments (cold-start scenarios), a single dataset is split into training, validation, and test sets with a 7:1:2 ratio. The exact numbers are listed in Table~\ref{tab:dataset_split}.

    \item \textbf{Segmentation.} The app usage sequences are sliced into segments of length $L/2$ (which will later be transformed into event sequences of length $L$). Segments shorter than $L/2$ (i.e., the remaining part of a sequence whose length is not a multiple of the window size) are padded with placeholder tokens; the corresponding model predictions on these padded positions are masked and excluded from both loss computation and metric evaluation.

    \item \textbf{Feature construction and shuffling.} The sliced app sequences are converted into event sequences, and target labels are generated following Section~\ref{sec:fomulation}. For every sliced sequence that requires encoding, we randomly select a mapping function $\phi$ and apply the shuffling mechanism to obtain $v_i$ and $v_s$ as described in Section~\ref{sec:shuffle}. After this step, the dataset is ready for training and evaluation.
\end{enumerate}

\subsubsection{Evaluation Metrics}\label{sec:metrics}
We adopt two standard metrics for recommendation and sequence prediction tasks: Hit Rate at \(k\) (HR@\(k\)) and Mean Reciprocal Rank at \(k\) (MRR@\(k\))~\cite{TGT,CoSEM}. For a given test sample, HR@\(k\) equals 1 if the correct item appears in the model's top-\(k\) predictions, and 0 otherwise; MRR@\(k\) equals the reciprocal rank of the correct item if it appears in the top \(k\), and 0 otherwise. For a test set of size \(N_{\text{test}}\),
\begin{equation}
    \begin{aligned}
        \text{HR@}k &= \frac{1}{N_{\text{test}}} \sum_{i=1}^{N_{\text{test}}} \mathbf{1}_{\text{hit}_i \leq k},\\
        \text{MRR@}k &= \frac{1}{N_{\text{test}}} \sum_{i=1}^{N_{\text{test}}} \frac{\mathbf{1}_{\text{hit}_i \leq k}}{\text{rank}_i},
    \end{aligned}
\end{equation}
where \(\text{rank}_i\) is the position (starting from 1) of the correct item in the ranked list for the \(i\)-th sample, and \(\mathbf{1}\) denotes the indicator function. %that equals 1 if the correct item is within the top \(k\) predictions, and 0 otherwise. 
In our experiments, we report results for \(k = 1, 3, 5\). All metrics are computed only on the valid positions of each sequence; padded positions are excluded from both loss and metric calculations, as described in the segmentation step in Section~\ref{subsec.process}.

\subsubsection{Implementation Details} \label{sec:implementation}
We employ a standard Transformer architecture as described in Section~\ref{sec:architecture}. The architectural and training hyperparameters are summarized in Table~\ref{tab:hyperparameters}. We always evaluate on the test set using the checkpoint with the lowest validation loss.

\begin{table}[htbp]
    \centering
    \caption{Architectural and Training Hyperparameters}
    \label{tab:hyperparameters}
    \begin{tabular}{lc}
        \toprule
        \textbf{Hyperparameter} & \textbf{Value} \\
        \midrule
        Latent Dimension & 256 \\
        Number of Heads & 4 \\
        Number of Transformer Layers & 8 \\
        Hidden Dimension in FFN & 512 \\
        Base for RoPE & $10^5$ \\
        Optimizer & AdamW \\
        Learning Rate & 0.0003 \\
        Number of Epochs & 1200 \\

        \bottomrule
    \end{tabular}
\end{table}

\subsection{Model Performance} \label{sec:performance}

\subsubsection{Experimental Setup and Evaluation Protocol} \label{sec:baseline}

We evaluate STAP against five baselines: two fundamental methods---\textbf{MRU} (most recently used) and \textbf{MFU} (most frequently used), and three recent leading models with cold-start capabilities—\textbf{CoSEM} \cite{CoSEM}, \textbf{NeuSA} \cite{NeuSA}, and \textbf{MAPLE} \cite{MAPLE}. 
For MRU and MFU, we use the same data preprocessing and sequence segmentation as STAP to ensure a fair comparison. 
CoSEM, NeuSA, and MAPLE rely on fixed app vocabularies and therefore cannot transfer across datasets. For these three models, we report their previously published in-dataset results from the MAPLE paper as reference upper bounds; for across datasets, they are marked as N/A (not applicable).

We adopt two evaluation scenarios that are more challenging than the standard within-dataset temporal split:
\begin{enumerate}
    \item \textbf{Cross-dataset (zero-shot transfer):} Train on one dataset, test on the other. The two datasets contain disjoint users and disjoint app sets, so this measures generalization to completely unseen app ecosystems.
    \item \textbf{In-dataset (cold-start):} Train and test within the same dataset, but the training and test users are disjoint (i.e., user-based partition). This is the standard cold-start scenario in the literature.
\end{enumerate}
The detailed split statistics are given in Table~\ref{tab:dataset_split}.

In each setting, we repeat the experiment 5 times with different random seeds for dataset splitting and parameter initialization; all results are reported as mean ± standard deviation over the 5 runs.

Table~\ref{tab:performance} summarizes the prediction performance under four settings:
\textbf{[Tsinghua $\to$ LSApp]} and \textbf{[LSApp $\to$ Tsinghua]} are the two cross-dataset transfers;
\textbf{[Tsinghua]} and \textbf{[LSapp]} are the in-dataset cold-start splits.
Bold and underlined values indicate the best and second-best results, respectively. N/A denotes that the model has no cross-dataset capability.
For reference, we also include the in-dataset performance of MAPLE as an upper bound, since this model achieved the highest published scores on these benchmarks. 

\textbf{Cross-dataset transfer performance.} CoSEM, NeuSA, and MAPLE rely on fixed app vocabularies and therefore completely lack cross-dataset ability, as indicated by N/A. While the heuristic baselines MRU and MFU can be applied across datasets, STAP consistently outperforms them on all metrics, with particularly large margins on HR@1, MRR@3, and MRR@5. For example, in the [Tsinghua $\to$ LSApp] setting, STAP attains an HR@1 of 68.95\%, whereas the best heuristic baseline (MFU) reaches only 29.14\%. Similar large gains are observed in the reverse direction and on ranking‑based metrics such as MRR@3 and MRR@5.

\textbf{In-dataset cold-start performance.} STAP achieves performance very close to the current state-of-the-art across all metrics (only 1–5\% lower than MAPLE) and significantly outperforms the remaining baselines. 

STAP therefore achieves competitive in‑dataset accuracy while also generalizing effectively to unseen app ecosystems.%\textcolor{blue}{Comments on the results are needed: We should say sth like our approach achieves good performance under both cross- and in-dataset scenarios for $k = 1, 3, 5$ and for both datasets.}

\begin{table*}[htbp]
    \centering
    \caption{Model Performance Comparison( $\%$ )}
    \label{tab:performance}
    \begin{tabular}{lccccc}
        \toprule
        \textbf{scenario} & \multicolumn{5}{c}{\textbf{Cross-dataset [Tsinghua $\to$ LSApp]}} \\
        \cmidrule(lr){2-6}
        \textbf{Model} & \textbf{HR@1} & \textbf{HR@3} & \textbf{HR@5} & \textbf{MRR@3} & \textbf{MRR@5} \\
        \midrule
        MFU & \underline{29.14} & 64.42 & 80.45 & \underline{44.55} & \underline{48.24} \\
        MRU & 0.00 & \underline{79.95} & \underline{87.96} & 38.37 & 40.21 \\
        \midrule
        CoSEM / NeuSA / MAPLE & N/A & N/A & N/A & N/A & N/A \\
        \emph{MAPLE(in-dataset)} & \emph{76.44} & \emph{88.48} & \emph{92.47} & \emph{81.81} & \emph{82.72} \\
        \midrule 
        \textbf{STAP} & \textbf{68.95 ± 0.07} & \textbf{85.19 ± 0.08} & \textbf{90.55 ± 0.05} & \textbf{76.20 ± 0.07} & \textbf{77.43 ± 0.06} \\

        \bottomrule
        \toprule

        \textbf{scenario} & \multicolumn{5}{c}{\textbf{Cross-dataset [LSapp $\to$ Tsinghua]}} \\
        \cmidrule(lr){2-6}
        \textbf{Model} & \textbf{HR@1} & \textbf{HR@3} & \textbf{HR@5} & \textbf{MRR@3} & \textbf{MRR@5} \\
        \midrule
        MFU & \underline{18.55} & 40.74 & 52.54 & \underline{28.23} & \underline{30.92} \\
        MRU & 0.00 & \underline{55.89} & \underline{68.10} & 25.45 & 28.26 \\
        \midrule
        CoSEM / NeuSA / MAPLE & N/A & N/A & N/A & N/A & N/A \\
        \emph{MAPLE(in-dataset)} & \emph{52.28} & \emph{74.17} & \emph{81.28} & \emph{62.06} & \emph{63.69} \\
        \midrule
        \textbf{STAP} & \textbf{42.63 ± 0.28} & \textbf{62.69 ± 0.25} & \textbf{70.65 ± 0.28} & \textbf{51.66 ± 0.26} & \textbf{53.42 ± 0.26} \\

        \bottomrule
        \toprule

        \textbf{scenario} & \multicolumn{5}{c}{\textbf{In-dataset [Tsinghua]}} \\
        \cmidrule(lr){2-6}
        \textbf{Model} & \textbf{HR@1} & \textbf{HR@3} & \textbf{HR@5} & \textbf{MRR@3} & \textbf{MRR@5} \\
        \midrule
        MFU & 18.55 & 40.74 & 52.54 & 28.23 & 30.92 \\
        MRU & 0.00 & 55.89 & 68.10 & 25.45 & 28.26 \\
        \midrule
        CoSEM & 31.11 & 55.97 & 65.25 & 42.04 & 44.16 \\
        NeuSA & 44.33 & 61.69 & 68.12 & 52.06 & 53.53 \\
        MAPLE & \textbf{52.28} & \textbf{74.17} & \textbf{81.28} & \textbf{62.06} & \textbf{63.69} \\
        \midrule
        \textbf{STAP} & \underline{49.99 ± 1.10} & \underline{71.41 ± 1.07} & \underline{78.87 ± 0.94} & \underline{59.52 ± 1.08} & \underline{61.23 ± 1.06} \\

        \bottomrule
        \toprule

        \textbf{scenario} & \multicolumn{5}{c}{\textbf{In-dataset [LSapp]}} \\
        \cmidrule(lr){2-6}
        \textbf{Model} & \textbf{HR@1} & \textbf{HR@3} & \textbf{HR@5} & \textbf{MRR@3} & \textbf{MRR@5} \\
        \midrule
        MFU & 29.14 & 64.42 & 80.45 & 44.55 & 48.24 \\
        MRU & 0.00 & 79.95 & 87.96 & 38.37 & 40.21 \\
        \midrule
        CoSEM & 45.23 & 72.43 & 81.04 & 57.18 & 59.18 \\
        NeuSA & 68.74 & 77.70 & 81.35 & 72.72 & 73.55 \\
        MAPLE & \textbf{76.44} & \textbf{88.48} & \textbf{92.47} & \textbf{81.81} & \textbf{82.72} \\
        \midrule
        \textbf{STAP} & \underline{71.27 ± 2.45} & \underline{86.14 ± 1.84} & \underline{91.05 ± 1.30} & \underline{77.90 ± 2.16} & \underline{79.03 ± 2.02} \\
        
        \bottomrule

    \end{tabular}
\end{table*}

\subsubsection{Ablation Study: Shuffle Mechanism} \label{sec:ablation}

We examine the importance of the shuffle mechanism, focusing on the necessity of re-randomizing the mapping $\phi$ across training epochs. Removing the shuffle mechanism entirely would fix the model to specific app identities, making cross-dataset transfer impossible. We therefore evaluate a weakened variant, termed \textbf{non-shuffle}, in which $\phi$ is randomly initialized for each user but fixed across epochs.

%\textcolor{blue}{Font size of labels in Figure~\ref{fig:ablation_shuffle} should be larger.}
Figure~\ref{fig:ablation_shuffle} compares the training dynamics of the base model and the non-shuffle variant.
The non-shuffle model converges prematurely around epoch~25 and exhibits clear overfitting, whereas the base model continues to improve over more than 1,000 epochs and achieves substantially better validation metrics.
Quantitative cross-dataset results are summarized in Table~\ref{tab:ablation_shuffle}, where ``Degradation'' denotes the absolute performance drop caused by fixing $\phi$ (STAP minus non-shuffle, in percentage points).
The consistent significant drops across all metrics (e.g., HR@1 drops by 6.5~pp, HR@5 by 9.7~pp) confirm that epoch-level re-randomization of the mapping is critical for the regularizing effect of the shuffle mechanism.

\begin{table*}[htbp]
    \centering
    \caption{Ablation results for the shuffle mechanism. Degradation = STAP $-$ non-shuffle (percentage points). All metrics are evaluated under the cross-dataset [Tsinghua $\to$ LSApp] setting.}
    \label{tab:ablation_shuffle}
    \begin{tabular}{lccccc}
        \toprule
        \textbf{Variant} & \textbf{HR@1} & \textbf{HR@3} & \textbf{HR@5} & \textbf{MRR@3} & \textbf{MRR@5} \\
        \midrule
        STAP (base)    & 68.95 & 85.19 & 90.55 & 76.20 & 77.43 \\
        non-shuffle    & 62.12 & 77.59 & 80.90 & 69.17 & 69.93 \\
        \midrule
        Degradation    & 6.47$\downarrow$ & 7.60$\downarrow$ & 9.65$\downarrow$ & 7.03$\downarrow$ & 7.50$\downarrow$ \\
        \bottomrule
    \end{tabular}
\end{table*}

\begin{figure*}[t] 
    \centering
    \includegraphics[width=1.0\textwidth]{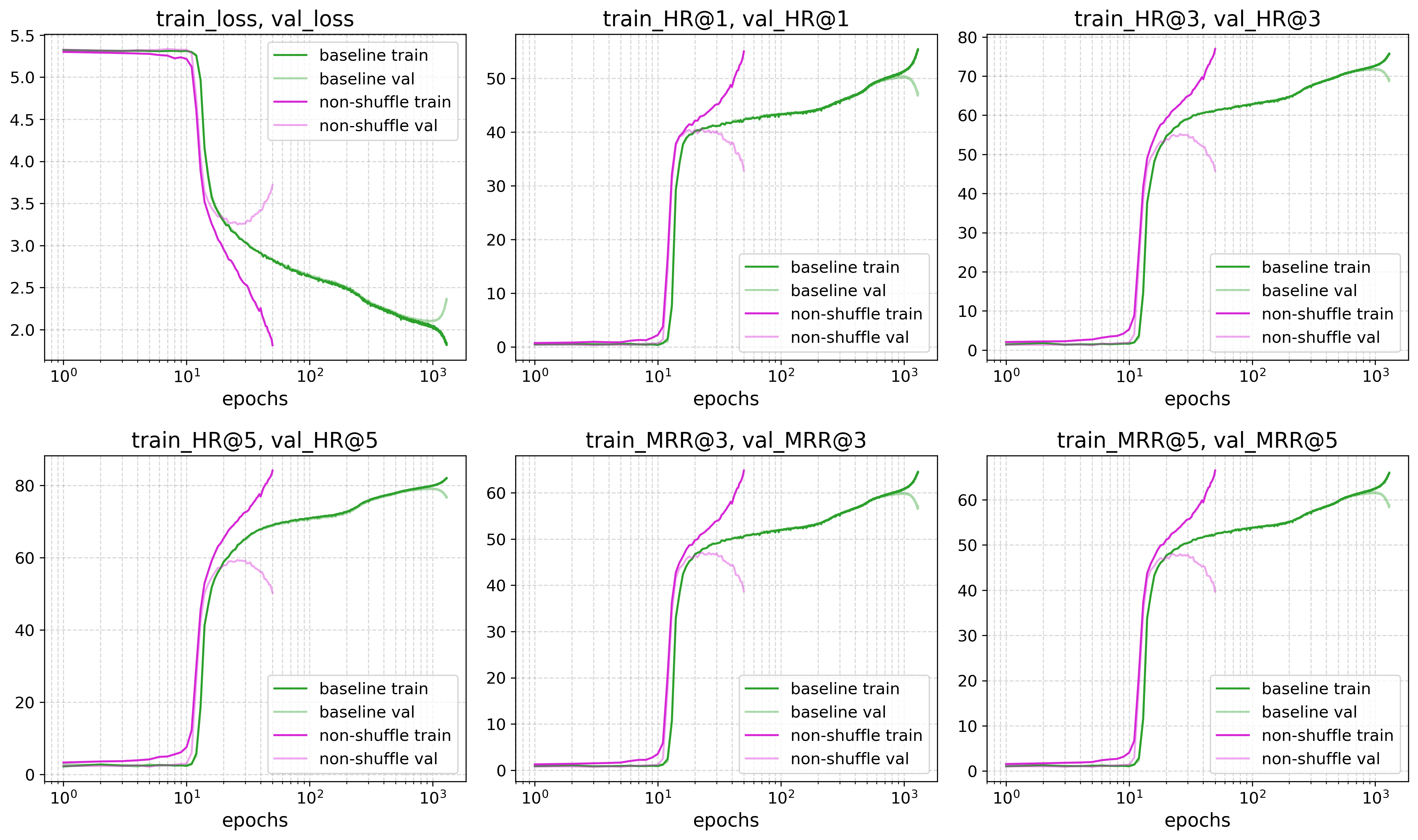} 
    \caption{Training and validation performance comparison between the base model and the non-shuffle ablation variant on the Tsinghua app usage dataset. The subplots illustrate the curves for loss and five evaluation metrics: HR@1, HR@3, HR@5, MRR@3, and MRR@5. The x-axis (number of epochs) is plotted on a logarithmic scale, while the y-axis represents the loss value or metric percentage. It is observed that the base model requires over 1,000 epochs to reach convergence, whereas the non-shuffle version converges prematurely around 25 epochs with significantly degraded performance.}
    \label{fig:ablation_shuffle}
\end{figure*}

\subsubsection{Sensitivity Study: Maximum Context Length}\label{sec:sensitivity}

We evaluate the effect of the maximum context length under the cross-dataset setting [Tsinghua $\to$ LSapp]. 
The context length varies from 16 to 16,384 events (doubling at each step), while all other hyperparameters are held fixed, except for the batch size which is adjusted to keep the total number of tokens per batch unchanged. 
Results are reported in Fig.~\ref{fig:sensitivity_maxlen}.

On both the validation and test sets, performance improves steadily as the context length increases, and the gains become marginal after 4096 events. 
This saturation is consistent with the sequence length distribution in the two datasets: as Fig.~\ref{fig:distribution} shows, most usage sequences contain fewer than 4096 events, so longer contexts provide little additional training signal.
Beyond the benefit plateau, further increasing the context length increases the memory and latency cost of the KV cache without meaningful predictive gains. 
Considering this trade-off, we set $L = 4096$ events as the default context length for all subsequent experiments.

\begin{figure*}[t]
    \centering
    \includegraphics[width=\linewidth]{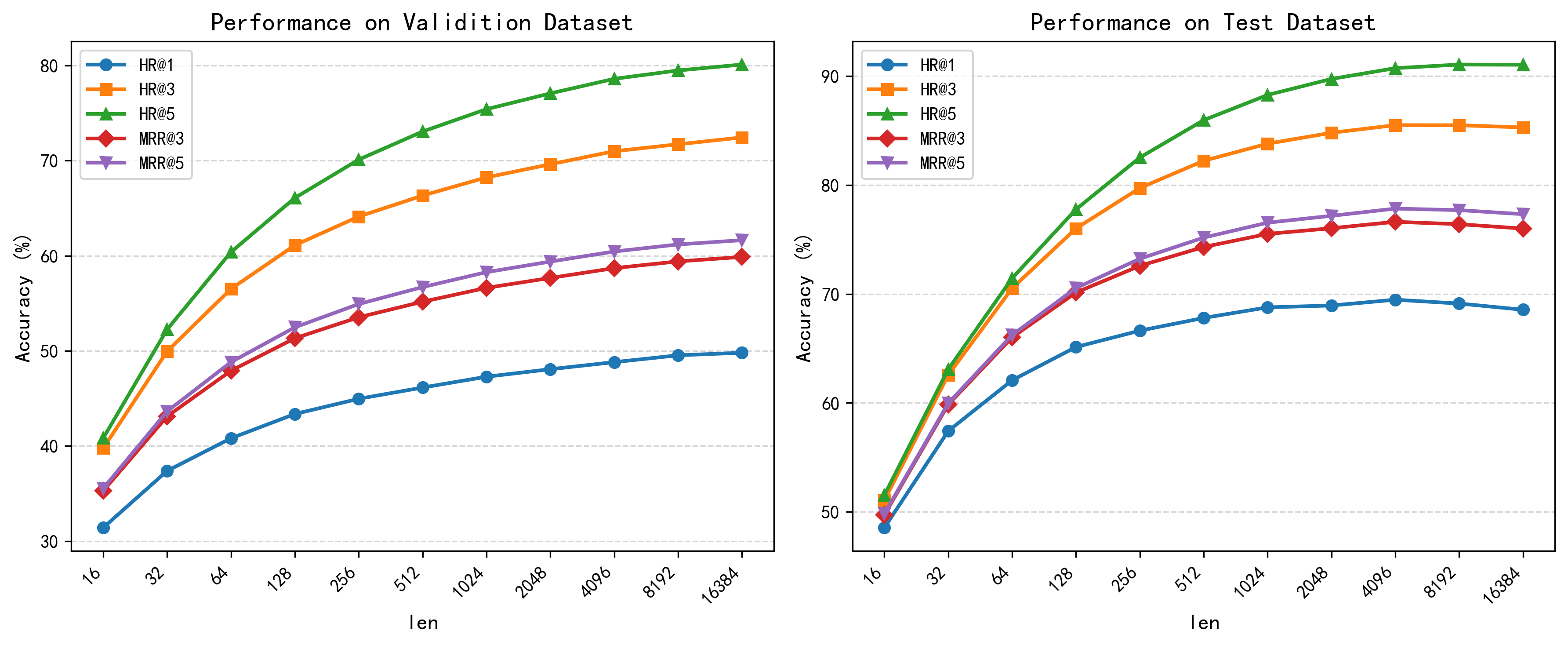} 
    \caption{Impact of the maximum context length on model performance under the cross-dataset setting [Tsinghua $\to$ LSapp]. The x-axis (logarithmic scale) denotes the maximum number of events in the context. The left panel shows metrics on the validation set (Tsinghua), the right panel on the test set (LSapp). All metrics improve as the context length grows, reaching saturation after 4096 events.}
    \label{fig:sensitivity_maxlen}
\end{figure*}

\subsubsection{Computational Efficiency} \label{sec:exp_deploy}

We evaluate the inference latency of STAP in a setting suitable for deployment. 
A lightweight C++ inference engine was implemented following the ISWI strategy (Section~~\ref{sec:iswi}) and executed on a single CPU thread without SIMD or other vectorized acceleration.
Figure~\ref{fig:inference_latency} reports the latency for each event measured over a long session covering more than 3300 app events.

The latency curve exhibits a characteristic sawtooth pattern that directly reflects the ISWI mechanism.
Within each alternation cycle, the latency grows roughly linearly with the length of the total KV cache length of the two instance, consistent with the $\mathcal{O}(L)$ complexity of Transformer decoding for each step.
When the active instance reaches its capacity $L$ and is cleared, the prediction duty shifts to the other instance, whose cache already holds $h = L/2$ events. 
This handover causes an immediate drop in latency, after which a new linear growth phase begins.
As a result, the latency for each step remains bounded—below 50\,ms throughout the entire session—and the long-term average is determined by the steady state cache size, never growing unboundedly.

The total memory footprint of the C++ inference engine is approximately 200\,MB, measured under single-thread execution. 
Of this, 44\,MB is consumed by the model weights (two independent instances with ${\sim}5.5$M parameters each, i.e.~double the single-model weight size). 
The remaining memory is primarily used by the two KV caches and intermediate buffers during inference.

These measurements confirm that ISWI successfully eliminates the startup latency spike of periodic full prefills (which would cost $\mathcal{O}(L^2)$) while preserving the benefits of a deep context.
The single-core C++ implementation keeps per-event latency below 50 ms throughout long sessions, which is well within the response time expected for interactive mobile applications.
Further optimizations such as multi threading of the two inference instances, SIMD acceleration, and FP16 quantization, are expected to provide additional speedup by a factor of several times for on-device deployment under tight resource constraints.

\begin{figure*}[htbp]
    \centering
    \includegraphics[width=0.8\textwidth]{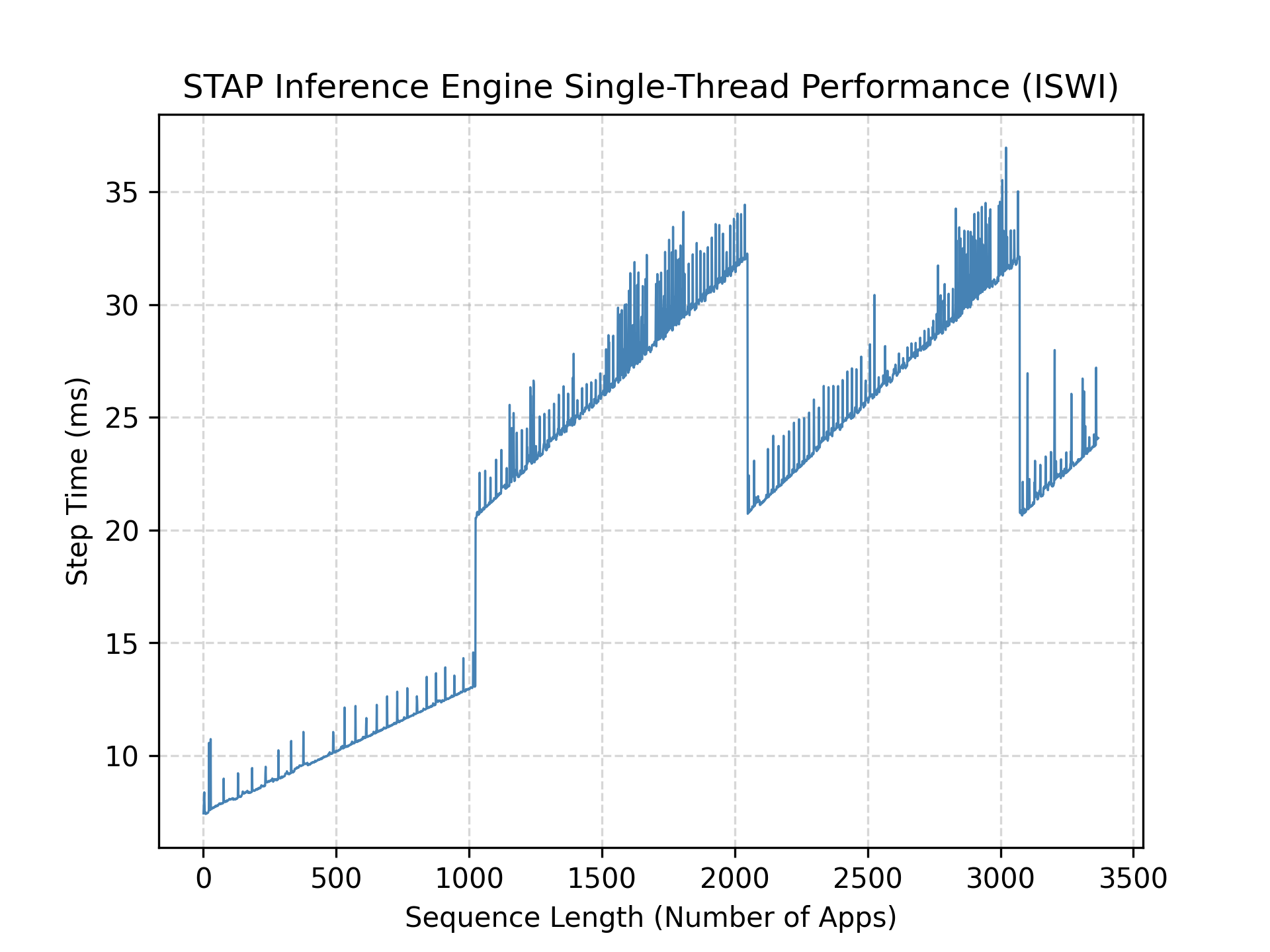}
    
    \caption{Inference latency per event of the STAP C++ engine over a session of more than 3300 app events (single thread, no SIMD). The sawtooth pattern corresponds to the alternating cache resets of ISWI; latency stays below 50\,ms throughout and scales linearly with the total cache length.}
    
    \label{fig:inference_latency}
\end{figure*}

\section{Discussion} \label{sec:discussion}

\subsection{Rationale of the Shuffle Mechanism} \label{sec:theoretical}

The shuffle mechanism replaces true app identities with random virtual indices whose mapping is resampled per user and epoch. This may seem counterintuitive, but its validity can be formally justified.

Let $\mathcal{X}$ be the set of real app IDs and $\mathcal{Y}$ the set of virtual indices, with $|\mathcal{X}| = |\mathcal{Y}| < \infty$.\footnote{If $|\mathcal{X}| < |\mathcal{Y}|$, we can augment $\mathcal{X}$ with $|\mathcal{Y}| - |\mathcal{X}|$ dummy apps that never appear in practice (i.e., their occurrence probability is zero). The resulting augmented set $\mathcal{X}'$ satisfies $|\mathcal{X}'| = |\mathcal{Y}|$, and the original analysis applies directly to $\mathcal{X}'$ and $\mathcal{Y}$.} Denote by $\Phi$ the set of all bijections from $\mathcal{X}$ to $\mathcal{Y}$, and let $\phi$ be a random bijection drawn uniformly from $\Phi$, i.e., $\pi(\phi) = 1/|\Phi|$.
 
% \textcolor{blue}{Note that I replaced all $P$ by ${\rm P}$}

Assume the true app sequence $X_{1:\infty}$ is generated by a stochastic process with conditional distribution ${\rm P}_X$, i.e., ${\rm P}(X_t \mid X_{1:t-1}) = {\rm P}_X(X_t \mid X_{1:t-1})$. Independently, a bijection $\phi^*$ is sampled uniformly from $\Phi$. The observed virtual sequence is then $Y_{1:\infty} = \phi^*(X_{1:\infty})$. The model never observes $X_{1:\infty}$ nor $\phi^*$; it only has access to $Y_{1:t}$ and its task is to predict $Y_{t+1}$.

Given $\phi$, the conditional distribution of $Y_{t+1}$ is
\begin{equation}
    {\rm P}_{\phi}(Y_{t+1}\mid Y_{1:t}) = {\rm P}_X\big(\phi^{-1}(Y_{t+1}) \mid \phi^{-1}(Y_{1:t})\big).
\end{equation}
The optimal Bayesian predictor, which our model aims to approximate with sufficient capacity and data, uses the posterior predictive distribution
\begin{equation}
    {\rm P}(Y_{t+1}\mid Y_{1:t}) = \sum_{\phi\in\Phi} {\rm P}_{\phi}(Y_{t+1}\mid Y_{1:t})\, {\rm P}(\phi\mid Y_{1:t}),
\end{equation}
where ${\rm P}(\phi\mid Y_{1:t})$ is the posterior weight of $\phi$ after observing $Y_{1:t}$.

% \textcolor{blue}{Formalize eq. (7) as a proposition, and the arguments as the proof environment. In the proof, you assume the posterior ${\rm P}(\phi\mid Y_{1:t})$ converges. Under what condition does it converge? Specify it in the proposition.}

\begin{proposition}[Convergence under shuffle]
\label{prop:convergence}
Assume that the posterior distribution \(\mathrm{P}(\phi \mid Y_{1:t})\) converges as $t\to\infty$ and that \(\lim_{t\to\infty} \mathrm{P}(\phi^* \mid Y_{1:\infty}) > 0\). \footnote{In real app usage sequences, user behavior tends to stabilize over time, making the posterior over shuffles converge as more data is observed. The true mapping $\phi^*$ retains positive probability because it generates the observed sequence.} Then the posterior predictive distribution converges to the true distribution under \(\phi^*\):
\begin{equation}
    \lim_{t\to\infty}\big[ \mathrm{P}(Y_{t+1}\mid Y_{1:t}) - \mathrm{P}_{\phi^*}(Y_{t+1}\mid Y_{1:t}) \big] = 0 .
    \label{eq:conv}
\end{equation}
\end{proposition}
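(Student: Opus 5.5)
The plan is to compare the mixture in the posterior predictive formula term by term with $\mathrm{P}_{\phi^*}$. Write $w_t(\phi)=\mathrm{P}(\phi\mid Y_{1:t})$ and let $w_\infty(\phi)$ be its limit, which exists by hypothesis. Since $\sum_\phi w_t(\phi)=1$, the posterior predictive formula gives, for every $y\in\mathcal{Y}$,
\begin{equation*}
\mathrm{P}(y\mid Y_{1:t})-\mathrm{P}_{\phi^*}(y\mid Y_{1:t})=\sum_{\phi\in\Phi} w_t(\phi)\bigl[\mathrm{P}_{\phi}(y\mid Y_{1:t})-\mathrm{P}_{\phi^*}(y\mid Y_{1:t})\bigr].
\end{equation*}
Because $\Phi$ is finite, it suffices to show that each summand tends to $0$. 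I split $\Phi$ into the set $\Phi_0=\{\phi: w_\infty(\phi)=0\}$ and its complement $\Phi_+$; by assumption $\phi^*\in\Phi_+$.

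For $\phi\in\Phi_0$ the argument is immediate. The bracket is a difference of two probabilities, so it lies in $[-1,1]$, and the weight $w_t(\phi)\to 0$. Hence these summands vanish in the limit. Finiteness of $\Phi$ matters here, since it lets us exchange the limit with the sum.

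For $\phi\in\Phi_+$ I use Bayes' rule on the ratio $R_t(\phi)=w_t(\phi)/w_t(\phi^*)$, which is well defined for large $t$ because $w_t(\phi^*)\to w_\infty(\phi^*)>0$. The prior $\pi$ is uniform, so the prior factors cancel and
\begin{equation*}
R_{t+1}(\phi)=R_t(\phi)\,\frac{\mathrm{P}_\phi(Y_{t+1}\mid Y_{1:t})}{\mathrm{P}_{\phi^*}(Y_{t+1}\mid Y_{1:t})}.
\end{equation*}
Both $w_t(\phi)$ and $w_t(\phi^*)$ converge to positive limits, so $R_t(\phi)$ converges to a finite positive number. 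Therefore the one-step likelihood ratio evaluated at the realized symbol $Y_{t+1}$ tends to $1$.

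The main obstacle is to upgrade this agreement at the realized symbol to agreement of the whole conditional distributions $\mathrm{P}_\phi(\cdot\mid Y_{1:t})$ and $\mathrm{P}_{\phi^*}(\cdot\mid Y_{1:t})$. My first approach is the Kakutani/Hellinger martingale argument, working under the true law, i.e.\ the process generated by $\phi^*$.
\begin{enumerate}
\item Under this law, $\sqrt{R_t(\phi)}$ is a nonnegative supermartingale.
\item Its conditional one-step multiplicative factor has expectation equal to the Bhattacharyya coefficient $\rho_t=\sum_y\sqrt{\mathrm{P}_\phi(y\mid Y_{1:t})\,\mathrm{P}_{\phi^*}(y\mid Y_{1:t})}$.
\item Since $\sqrt{R_t(\phi)}$ converges to a strictly positive limit, the standard product argument forces $\prod_t \rho_t>0$, equivalently $\sum_t\bigl(1-\rho_t\bigr)<\infty$.
\item Summability gives $\rho_t\to 1$. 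Since $1-\rho_t$ is the squared Hellinger distance, which controls total variation, the two conditional distributions merge.
\end{enumerate}
Combining the $\Phi_0$ and $\Phi_+$ cases yields \eqref{eq:conv}, almost surely along the observed path.

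If the supermartingale route proves technically awkward, I would fall back to the Blackwell--Dubins merging theorem. The observed law is the mixture $\sum_\phi \pi(\phi)\mathrm{P}_\phi$, which has bounded density with respect to $\mathrm{P}_{\phi^*}$. The theorem then gives merging of the predictives directly. In that case the hypothesis $w_\infty(\phi^*)>0$ serves only to identify which component the predictive merges with.
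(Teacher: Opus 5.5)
Your argument is correct, and its first half is exactly the paper's proof. The paper also uses the uniform prior to turn the posterior ratio into a product of one-step likelihood ratios at the realized symbols. It then argues that convergence of $\mathrm{P}(\phi\mid Y_{1:t})$ to positive limits for $\phi$ and $\phi^*$ makes the telescoped log-sum converge, so its terms vanish. The $c(\phi)=0$ terms are killed by their weights, as in your $\Phi_0$ case.

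The routes split at your ``main obstacle.'' The paper reads $Y_{t+1}$ in the conclusion as the realized next symbol and stops there. That gives an elementary argument valid deterministically along every path satisfying the hypotheses. You instead prove agreement of the entire conditional laws, which is what the paper's informal summary (remaining mappings ``yield the same predictions'') actually asserts. Your tool is the normalized nonnegative supermartingale $\sqrt{R_t}/\prod_{s<t}\rho_s$. Its almost-sure convergence forces $\prod_t\rho_t>0$ on $\{R_\infty\in(0,\infty)\}$, and hence total-variation merging.

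This buys the stronger conclusion at the cost of martingale machinery and an almost-sure qualifier, and the qualifier cannot be dropped. Take $\mathrm{P}_X$ i.i.d.\ $(1/2,1/3,1/6)$ on three apps, $\phi^*=\mathrm{id}$, and the null path $Y_t\equiv 1$. The posterior converges to weight $1/2$ on $\mathrm{id}$ and on the transposition $(2\,3)$, so both hypotheses hold. Yet the limiting predictive assigns $1/4$ rather than $1/3$ to symbol $2$, while the realized symbol $1$ gets the correct value $1/2$.

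There is one slip in your fallback. Blackwell--Dubins requires the \emph{true} law to be absolutely continuous with respect to the predictor's law, here $\mathrm{P}_{\phi^*}\le|\Phi|\sum_{\phi}\pi(\phi)\mathrm{P}_\phi$. You stated it the other way round, and the mixture generally has no density with respect to $\mathrm{P}_{\phi^*}$. With that direction fixed, the theorem gives merging $\mathrm{P}_{\phi^*}$-a.s.\ without using either hypothesis of the proposition.
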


\begin{proof}
By Bayes' rule,
\begin{equation}
    {\rm P}(\phi\mid Y_{1:t}) = \frac{\pi(\phi)\,{\rm P}_{\phi}(Y_{1:t})}{\sum_{\phi'}\pi(\phi')\,{\rm P}_{\phi'}(Y_{1:t})},
\end{equation}
and since $\pi(\phi)$ is uniform,
\begin{equation}
    \frac{{\rm P}(\phi\mid Y_{1:t})}{{\rm P}(\phi^*\mid Y_{1:t})}
    = \frac{{\rm P}_{\phi}(Y_{1:t})}{{\rm P}_{\phi^*}(Y_{1:t})}.
\end{equation}
Taking logarithms and using the chain rule,
\begin{equation}
    \log \frac{{\rm P}(\phi\mid Y_{1:t})}{{\rm P}(\phi^*\mid Y_{1:t})}
    = \sum_{i=1}^t \log \frac{{\rm P}_{\phi}(Y_i\mid Y_{1:i-1})}{{\rm P}_{\phi^*}(Y_i\mid Y_{1:i-1})},
    \label{eq:logratio}
\end{equation}
By assumption, $\lim_{t\to\infty} {\rm P}(\phi\mid Y_{1:t}) = c(\phi) \in [0,1]$ exists, and that the true mapping $\phi^*$ retains a nonzero probability ($c(\phi^*) > 0$). For any $\phi$ with $c(\phi) > 0$, the ratio in \eqref{eq:logratio} converges to a finite constant, so the infinite sum of log-ratios is finite. Hence the $i$-th term tends to zero:
\begin{equation}
\lim_{i\to\infty} \log \frac{{\rm P}_{\phi}(Y_i\mid Y_{1:i-1})}{{\rm P}_{\phi^*}(Y_i\mid Y_{1:i-1})} = 0,
\end{equation}
which entails
\begin{equation}
\lim_{i\to\infty} \big| {\rm P}_{\phi^*}(Y_i\mid Y_{1:i-1}) - {\rm P}_{\phi}(Y_i\mid Y_{1:i-1}) \big| = 0. 
\end{equation}
If $c(\phi) = 0$, then simply $\lim_{t\to\infty} {\rm P}(\phi\mid Y_{1:t}) = 0$. Therefore, for every $\phi$,
\begin{equation}
    \lim_{t\to\infty} \big[ {\rm P}_{\phi^*}(Y_{t+1}\mid Y_{1:t}) - {\rm P}_{\phi}(Y_{t+1}\mid Y_{1:t}) \big]\, {\rm P}(\phi\mid Y_{1:t}) = 0.
\end{equation}
Substituting into \eqref{eq:conv} and using the fact that $\Phi$ is finite to exchange sum and limit, we have
\begin{equation}
    \begin{aligned}
        &\lim_{t\to\infty} \big[ {\rm P}(Y_{t+1}\mid Y_{1:t}) - {\rm P}_{\phi^*}(Y_{t+1}\mid Y_{1:t}) \big] \\
        =& \sum_{\phi} \lim_{t\to\infty} \big[ {\rm P}_{\phi}(Y_{t+1}\mid Y_{1:t}) - {\rm P}_{\phi^*}(Y_{t+1}\mid Y_{1:t}) \big]\, {\rm P}(\phi\mid Y_{1:t}) \\
        =& 0.
    \end{aligned}
\end{equation}

\end{proof}

In summary, with a sufficiently long context, either the true mapping is uniquely identified, or all remaining possible mappings eventually yield the same predictions, making the uncertainty irrelevant. In practice, distinct real app sequences rarely produce structurally indistinguishable virtual sequences over long horizons, so the first situation dominates. This shows that the shuffle mechanism is valid as long as the context is sufficiently long, which is  the condition of our ultra-long context design.

\begin{figure*}[t]
    \centering
    \includegraphics[width=\linewidth]{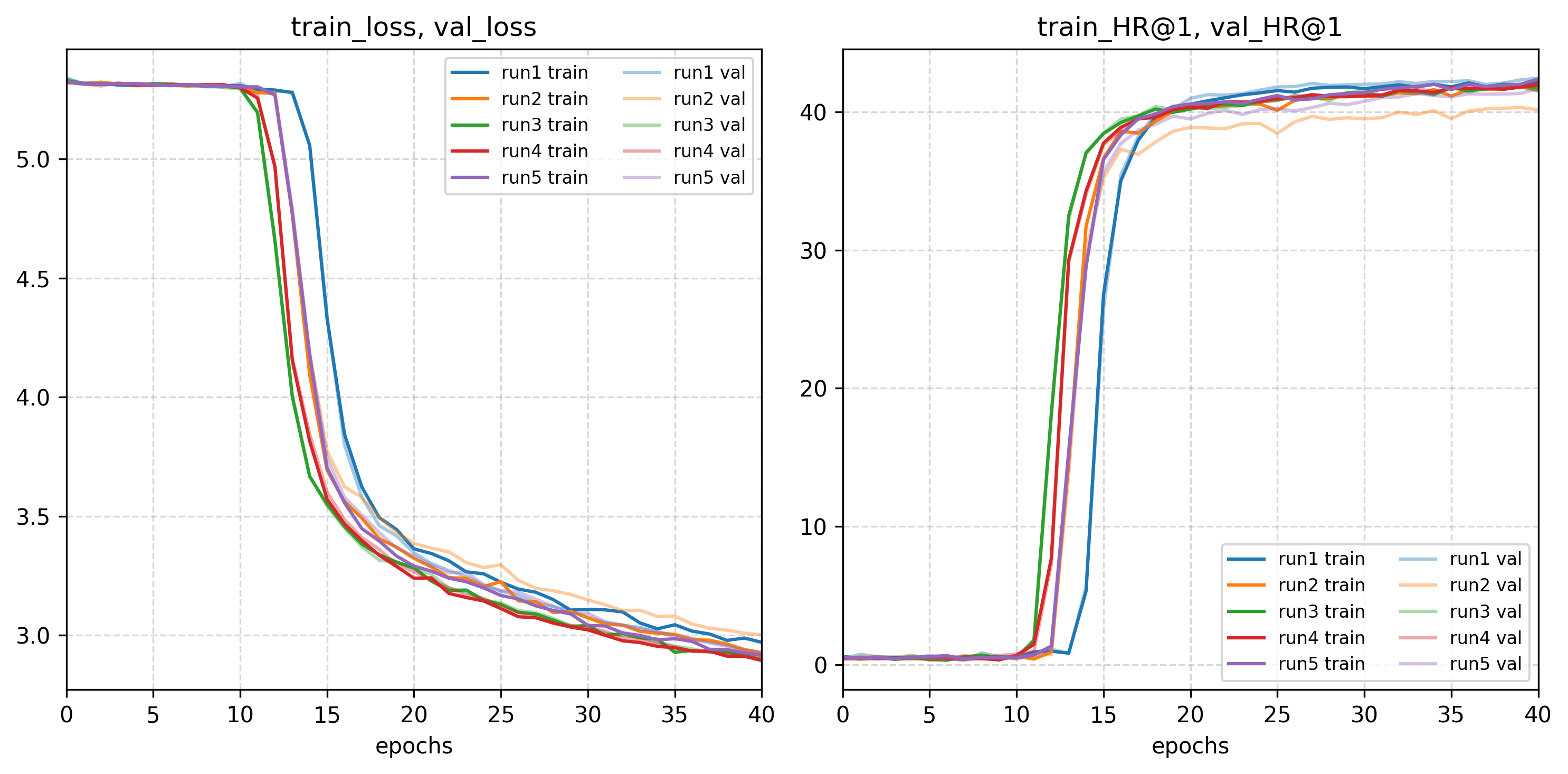}
    \caption{Training loss and HR@1 curves of the baseline model on the Tsinghua App Usage Dataset over five runs with different random seeds. The model performs close to random prediction in the first 10 epochs. Around epochs 11--15, the performance improves sharply, and then the model enters a more stable training phase with gradual gains.}
    \label{fig:grok1}
\end{figure*}

\begin{figure*}[t]
    \centering
    \includegraphics[width=\linewidth]{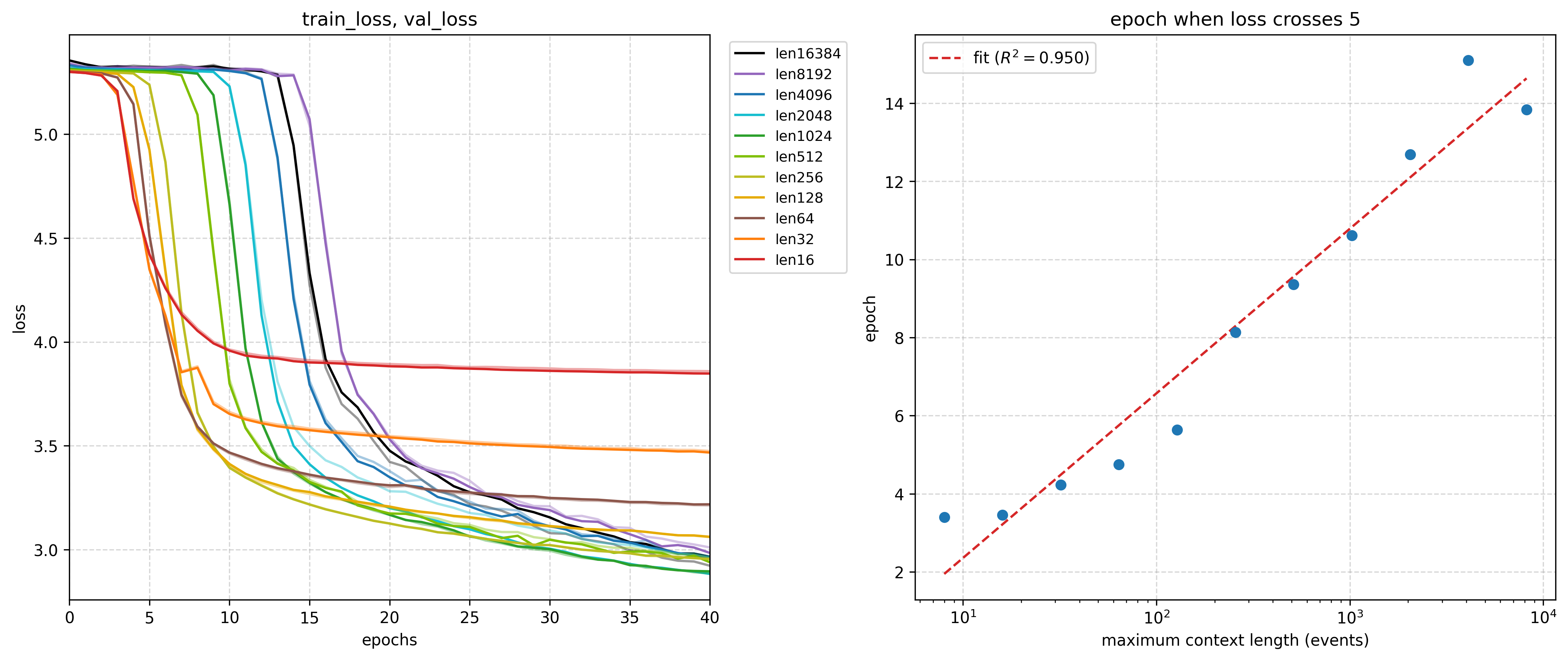}
    \caption{Relation between the phenomenon epoch and the maximum context length (in events). The left panel shows the loss curves for different context lengths in the sensitivity study of Section~~\ref{sec:sensitivity}. The numbers in the labels (e.g., len4096) refer to the maximum number of events in the context. The right panel shows the epoch when the loss first reaches 5.0 for each context length, estimated by linear interpolation between adjacent epochs. The fitted line on the log-transformed context lengths yields $R^2=0.95$, confirming a strong log-linear relationship.}
    \label{fig:grok2}
\end{figure*}

\subsection{Observations on Training Dynamics} \label{sec:grok}

We report a notable observation on the training dynamics of STAP. 
Unlike typical training curves, where the loss drops rapidly at the beginning and then gradually converges, STAP exhibits a prolonged flat phase during the early epochs, with prediction metrics close to random guessing. 
After a number of epochs, the loss suddenly drops and the accuracy increases sharply within a few epochs.

Taking the baseline configuration from Section~\ref{sec:baseline} as an example (Fig.~\ref{fig:grok1}), this phenomenon consistently appears around epochs 11--15 across multiple runs on the Tsinghua App Usage Dataset. 
During the first $\sim$10 epochs, the loss fluctuates mildly between 5.31 and 5.35, and HR@1 remains near $0.5\%$. 
Once the loss falls below 5.30, it drops rapidly to $3.5$--$4.0$ within roughly 5 epochs, while HR@1 increases abruptly to approximately $40\%$. 
Thereafter, training follows a standard pattern: the loss continues to decline slowly, and the accuracy keeps improving but at a diminishing rate. 
This stable phase lasts over 1,000 epochs before any sign of overfitting appears.

Furthermore, in the sensitivity analysis of the maximum context length (Section~\ref{sec:sensitivity}), we observed a clear log-linear relationship: the {\it phenomenon epoch} scales linearly with the logarithm of the context length (Fig.~\ref{fig:grok2}). 
We define the phenomenon epoch as the first epoch when the training loss reaches $5.0$, estimated by linear interpolation between adjacent recorded epochs. 
A linear fit on the log-transformed context lengths yields $R^2 = 0.95$, indicating that the position of this sudden learning phase is strongly correlated with the size of the context window.

\subsection{Limitations and Future Work}\label{sec:future}

Despite the strong cross-dataset generalization and efficient deployment achieved by STAP, several aspects remain open for investigation.

First, the virtual vocabulary size $V$ is currently fixed at 200. While this covers the vast majority of users, it excludes a small fraction who have installed and used more than 200 distinct applications. An adaptive mechanism that adjusts $V$ per user, or a hierarchical indexing approach, could broaden the applicability without sacrificing the core shuffle principle.

Second, the sensitivity analysis in Section~\ref{sec:sensitivity} shows that increasing the context length beyond $4096$ events yields diminishing prediction accuracy. However, both datasets contain relatively few sequences long enough to benefit from larger windows. It therefore remains unclear whether the observed saturation reflects a fundamental limitation of the method or simply data scarcity. Larger-scale datasets containing users with longer sequences would help answer this question.

Third, the current evaluation is limited to two datasets collected in different regions but within a similar time period. 
The Tsinghua App Usage Dataset was collected in Shanghai, China in 2016 \cite{TsinghuaApp}, while the LSapp dataset was collected in North America during 2017--2018 \cite{NeuSA}. 
Cross-dataset generalization across substantially different time spans (e.g., training on data from one decade and testing on another) has not been examined. Extending the evaluation to more diverse ecosystems would further validate the universality of the shuffle mechanism.

On the theoretical side, the prolonged flat phase followed by rapid generalization observed in Section~\ref{sec:grok} remains without a mechanistic explanation. Two directions worth further study: understanding how different hyperparameters influence this phenomenon, and uncovering the mechanism behind it. Both would help reveal how Transformers behave under extreme vocabulary anonymization and better understand their capability boundaries.

Finally, a practical aspect worth highlighting is that STAP requires no user identifiers or app metadata, making it naturally suited for on-device inference where privacy regulations are strict. Future engineering efforts could further reduce the memory footprint through weight sharing between the two ISWI instances and quantization, bringing the model closer to deployment on low-end devices.
\section{Conclusion}
\label{sec:conclusion}

This paper studied next-app prediction under the constraint of vocabulary independence, with the goal of enabling models that generalize across different app ecosystems without retraining.
We proposed STAP, which integrates two complementary mechanisms: a shuffle mechanism that replaces true app identities with randomly reassigned virtual indices, and an ultra-long context design that allows the model to infer app semantics purely from extended temporal and structural patterns. 
To resolve the conflict between long-context dependence and bounded inference cost, we introduced ISWI, a deployment strategy that guarantees a minimum context of $L/2$ at every prediction step while ensuring an acceptable upper bound on latency.

Experiments on two datasets collected from different continents demonstrate that STAP performs cross-dataset zero-shot prediction at practical accuracy levels (e.g., HR@1 up to 69\% in [Tsinghua $\to$ LSapp]), whereas all existing fixed-vocabulary methods are inherently inapplicable, while in-dataset cold-start performance is competitive with state-of-the-art models. Meanwhile, A single-core C++ implementation of STAP achieves per-event latency below 50\,ms and a total memory footprint of 200\,MB, confirming its feasibility for real-time on-device inference.

Ablation experiments verified that re-randomizing the virtual mapping across epochs is essential to prevent overfitting and to realize the regularizing effect of the shuffle mechanism, while sensitivity analysis showed that prediction accuracy increases substantially with longer context windows before gradually saturating, confirming the value of the ultra-long context design and indicating that the chosen length captures most of the achievable benefit. 

Meanwhile, a theoretical analysis showed that even when the random mapping is not uniquely identifiable, the Bayesian predictive distribution still converges to the correct one given a sufficiently long context, providing formal justification for the shuffle design.

We also reported a noteworthy training phenomenon: the model remains near-random for many epochs before the loss suddenly drops and accuracy spikes, with the onset epoch exhibiting a clear log-linear relationship to the context length.

These results show that fixed app identities are not a strictly necessary for accurate next-app prediction: shuffled virtual tokens, combined with sufficiently long context, can support competitive accuracy while additionally enabling zero-shot transfer across app ecosystems.
The vocabulary-free nature of STAP also makes it naturally suited for privacy-sensitive deployment scenarios.

Future work includes training on larger-scale datasets with richer long sequence usage records and evaluating cross-dataset generalization across significantly longer time spans, as well as developing theoretical understandings of the distinctive training dynamics observed in ultra-long shuffled contexts.

\bibliographystyle{elsarticle-num}

\bibliography{cas-refs}

\end{document}